\newtheorem{thm}{Theorem}
\newtheorem{algorithm}{Algorithm}
\newtheorem{problem}{Problem}
\newcommand{\LTLUNTIL}{\ensuremath{\ \mathcal{U}\ }}
\newcommand{\LTLNEXT}{\ensuremath{\ \bigcirc\ }}
\newcommand{\LTLEVENTUALLY}{\ensuremath{\ \diamondsuit\ }}
\newcommand{\ARUN}{\ensuremath{r}}
\newcommand{\tab}{\hspace*{4em}}
\begin{document}
%
% paper title
% can use linebreaks \\ within to get better formatting as desired
\title{\LARGE \bf Technical Report: A Receding Horizon Algorithm for Informative Path Planning with Temporal Logic Constraints}

% author names and affiliations
% use a multiple column layout for up to three different
% affiliations
\author{Austin Jones, Mac Schwager, and Calin Belta}
%\and
%\IEEEauthorblockN{Mac Schwager}
%\IEEEauthorblockA{Department of Mechanical Engineering \\ Boston University \\ Boston, MA 02115 \\ schwager@bu.edu}
%\and
%\IEEEauthorblockN{Calin Belta}
%\IEEEauthorblockA{Department of Mechanical Engineering \\ Division of Systems Engineering \\ Boston University \\ Boston, MA 02115 \\ cbelta@bu.edu}}

% conference papers do not typically use \thanks and this command
% is locked out in conference mode. If really needed, such as for
% the acknowledgment of grants, issue a \IEEEoverridecommandlockouts
% after \documentclass

% for over three affiliations, or if they all won't fit within the width
% of the page, use this alternative format:
% 
%\author{\IEEEauthorblockN{Michael Shell\IEEEauthorrefmark{1},
%Homer Simpson\IEEEauthorrefmark{2},
%James Kirk\IEEEauthorrefmark{3}, 
%Montgomery Scott\IEEEauthorrefmark{3} and
%Eldon Tyrell\IEEEauthorrefmark{4}}
%\IEEEauthorblockA{\IEEEauthorrefmark{1}School of Electrical and Computer Engineering\\
%Georgia Institute of Technology,
%Atlanta, Georgia 30332--0250\\ Email: see http://www.michaelshell.org/contact.html}
%\IEEEauthorblockA{\IEEEauthorrefmark{2}Twentieth Century Fox, Springfield, USA\\
%Email: homer@thesimpsons.com}
%\IEEEauthorblockA{\IEEEauthorrefmark{3}Starfleet Academy, San Francisco, California 96678-2391\\
%Telephone: (800) 555--1212, Fax: (888) 555--1212}
%\IEEEauthorblockA{\IEEEauthorrefmark{4}Tyrell Inc., 123 Replicant Street, Los Angeles, California 90210--4321}}

% use for special paper notices
%\IEEEspecialpapernotice{(Invited Paper)}

% make the title area
\maketitle
\thispagestyle{empty}
\pagestyle{empty}
\let\thefootnote\relax\footnotetext{Austin Jones is with the Division
  of Systems Engineering, Mac Schwager and Calin Belta are with the
  Division of Systems Engineering and the Department of Mechanical
  Engineering at Boston University, Boston, MA 02115. Email:
  \{austinmj,schwager,cbelta\}@bu.edu

This work was partially supported by ONR under grants MURI N00014-09-1051 and ONR MURI N00014-10-10952 and by NSF under grant CNS-1035588}

\begin{abstract}
%\boldmath
  This technical report is an extended version of the paper
  'A Receding Horizon Algorithm for Informative Path Planning with Temporal Logic Constraints'
  accepted to the 2013 IEEE International Conference on Robotics
  and Automation (ICRA).
  
  This paper considers the problem of finding the most informative
  path for a sensing robot under temporal logic constraints, a richer
  set of constraints than have previously been considered in
  information gathering.  An algorithm for informative path planning
  is presented that leverages tools from information theory and formal
  control synthesis, and is proven to give a path that satisfies the
  given temporal logic constraints.  The algorithm uses a receding
  horizon approach in order to provide a reactive, on-line solution
  while mitigating computational complexity.  Statistics compiled from
  multiple simulation studies indicate that this algorithm performs
  better than a baseline exhaustive search approach.
\end{abstract}
% IEEEtran.cls defaults to using nonbold math in the Abstract.
% This preserves the distinction between vectors and scalars. However,
% if the conference you are submitting to favors bold math in the abstract,
% then you can use LaTeX's standard command \boldmath at the very start
% of the abstract to achieve this. Many IEEE journals/conferences frown on
% math in the abstract anyway.

% no keywords

% For peer review papers, you can put extra information on the cover
% page as needed:
% \ifCLASSOPTIONpeerreview
% \begin{center} \bfseries EDICS Category: 3-BBND \end{center}
% \fi
%
% For peerreview papers, this IEEEtran command inserts a page break and
% creates the second title. It will be ignored for other modes.
\IEEEpeerreviewmaketitle

\section{Introduction}
\label{introduction}

In this paper we propose an algorithm for controlling a mobile sensing
robot to collect the most valuable information in its environment,
while simultaneously carrying out a required sequence of actions
described by a temporal logic (TL) specification.  Our algorithm is
useful in situations where a robot's main objective is to collect
information, but it must also perform pre-specified actions
for the sake of safety or reliability. Consider searching for a survivor trapped in the
rubble of a collapsed building.  Our
algorithm would drive the robot to locate the survivor while  
avoiding obstacles, and returning to a rescue worker to
report on the progress of its search.  The obstacle avoidance and
visit to the worker are represented as temporal logic
constraints.  In order to locate the survivor, the robot plans a path
on-line, in a receding horizon fashion, such that it localizes the
survivor as precisely as possible, while still satisfying the temporal
logic constraints.

This work brings together methods from information theory and formal
control synthesis to create new tools for robotic information
gathering under complex constraints.  More specifically, the robot uses
a recursive Bayesian filter to estimate the quantity of interest in
its environment (e.g. the location of a survivor) from its noisy sensor measurements.  The Shannon entropy of
the Bayesian estimate is used as a measure of the robot's uncertainty
about the quantity of interest.  The robot plans a path to
maximally decrease the expected entropy of its estimate over a finite
time horizon, subject to the TL constraints.  The path
planning is repeated at each time step as the Bayesian filter is
updated with new sensor measurements to give a reactive, receding
horizon planner.  
%Taking inspiration from recent methods in formal
%control synthesis~\cite{DiLaBe-ACC-2012}, we prove that 
Our algorithm is
guaranteed to satisfy the TL specification.  We
compare the performance of our algorithm to a non-reactive, exhaustive
search method.  We show in statistics compiled from extensive
simulations that our receding horizon algorithm gives a lower entropy
estimate  with lower computational
complexity than the exhaustive search method.

The algorithm we present is applicable to many scenarios in which we
want a robot to gather informative data, but where safety and
reliability are critical.  For example, our algorithm can be used by a
mobile robot deployed on Mars that is tasked with collecting soil
samples and images while gathering enough sunlight to charge its
batteries and avoiding dangerous terrain.  In an animal population
monitoring scenario, our algorithm can drive a robot to count animals
of a given species whose positions are unknown while avoiding
sensitive flora and fauna, eventually uploading data to scientists.
Our algorithm could also be used, for example, in active SLAM to
control a robot to build a minimum uncertainty map \cite{ThrunBook08SlaM}
of its environment while avoiding walls and returning to a base
station for charging.

Extensive work already exists in using information theoretic tools in
robotic information gathering applications.  Most of this work uses a
one-step-look-ahead approach \cite{BourgaultICRA02infAdapt,SchwagerISRR11Hazards}, a
receding horizon approach \cite{GanICRA12recHor,BinneyICRA2010infPlan}, or an offline plan
based on the sub-modularity property of mutual information
\cite{SinghJAIR09efficientinformative,SinghIJCAI07infPlan,MelioAAAAI07nstip}.  The key innovation
in our algorithm is that it gives a path which is guaranteed to
satisfy rich temporal logic constraints.  Temporal logic constraints
can specify complex, layered temporal action sequences that are
considerably more expressive than the static constraints considered in
previous works.  Indeed, much of the work in constrained informative
path planning can be phrased as a special case of the TL constraints
that we consider here.  For example the authors of \cite{BinneyICRA2010infPlan}
solve an information-gathering problem in which an underwater agent
must avoid high traffic areas and communicate with
researchers---constraints which can be naturally expressed as TL
statements.

In this work, we consider a particular kind of temporal logic called
syntactically co-safe linear temporal logic (scLTL)
\cite{VardiFormMeth01safety}.  Synthesis of trajectories from scLTL
specifications is currently an active area of research
\cite{GolHscc12scltl,BhatiaCDC2010scltl}, as is the use of receding horizon
control to solve optimization problems over TL-constrained systems.
Receding horizon control (RHC), sometimes referred to as
model-predictive control, is a control technique in which current
information is used to predict performance over a finite horizon
\cite{MayneAuto99thSurvey,QinConEng03inSurvey}.  The authors of \cite{WongpiromsarnHSCC10heir} use a receding
horizon path planning algorithm that satisfies TL constraints in a
provably correct manner and is capable of correcting navigational
errors on-line.  In \cite{DiLaBe-ACC-2012}, the authors extended this principle
to provide a receding horizon algorithm for gathering time-varying,
deterministic rewards in a TL-constrained system.  The analysis of our
informative planning algorithm was inspired by \cite{DiLaBe-ACC-2012}, with the
significant difference that information gain is a stochastic quantity
which depends on noisy sensor measurements.

The paper is outlined as follows. We define the necessary mathematical
preliminaries in Section~\ref{mathMod}.  In Section~\ref{infPathPlan},
we formalize the scLTL-constrained informative path planning problem.
In Section~\ref{algDesc} we present our receding horizon algorithm,
and prove that it satisfies the scLTL constraints.  Results from
simulations comparing our algorithm to a baseline exhaustive search
method are presented in Section~\ref{caseStudy}.  Finally, in
Section~\ref{goals}, we give our conclusions and discuss directions
for future work.

As mentioned in the abstract, this report is an extended version of a
paper accepted to the 2013 ICRA conference.  The main additions
are some information theory definitions in Section \ref{mathMod}
and a proof of Theorem \ref{convergeThm} in Section \ref{algDesc}.

\section{Notation and definitions}
\label{mathMod}
 For a set $S$, we use $|S|$ and $2^S$ to denote its cardinality and power set, respectively.  $S \times T$ is the Cartesian product of $S$ and $T$.  
 
%  We denote the range space of a discrete random variable $X$ as $R_X$, its realization as $x \in R_X$, and its probability mass function (pmf) as  $p_X$.  We denote the  \em (Shannon) entropy, conditional entropy, \em and \em mutual information \em  of discrete random variables $X,Y$ \cite{ShannonBell48ent,CoverBook06elemInfo}   as $H(X),H(X|Y),$ and $I(X;Y)$, respectively.
 
 \subsection{Information theory}
\label{infTheory}
Information theory is a general mathematical theory of communication \cite{ShannonBell48ent}.  In this work we borrow  from information theory measures of uncertainty and information content of discrete random variables.

 We denote the range space of a discrete random variable $X$ as $R_X$, its realization as $x \in R_X$, and its probability mass function (pmf) as  $p_X$.  The  \em Shannon entropy \em  \cite{ShannonBell48ent} (referred to in the sequel as simply ``entropy") of a discrete random variable $X$ is 

\begin{equation}
\label{entropy}
H(X) = H(p_X) = -\sum_{x \in R_X} p_X(x) \log(p_X(x)).
\end{equation}

The logarithm in \eqref{entropy} is base 2 by convention and entropy is measured in units of ``bits".  The entropy of a random variable is a measure of its ``randomness" or ``uncertainty".   
For a fixed range space $R_X$,  $H(X) \in [0, \log(|R_X|)]$.   A uniformly distributed random variable achieves the upper bound and a deterministic variable achieves the lower bound.  
%This matches intuition, as we can perfectly guess the value of a deterministic variable and would have to make many guesses about a realization from a uniform distribution.

For two discrete random variables $X$ and $Y$,  the \em conditional entropy \em  \cite{CoverBook06elemInfo} of $X$ given $Y$ is

\begin{align}
\label{condEnt}
\begin{split}
H(X|Y)   = & H(p_X|p_Y) \\  = &  -\sum_{y \in R_Y} \sum_{x \in R_X} p_{X,Y}(x,y) \log(p_{X|Y}(x|y)).
%\\ = & \sum_{y \in R_Y} H(X|Y=y)p_Y(y). \\
\end{split}
\end{align}

In general, $H(X|Y) \in [0,H(X)]$.  $H(X|Y)$ is a measure of how random $X$ is if we are given knowledge of $Y$ and the statistical relationship between $X$ and $Y$.

The \em mutual information \em  \cite{CoverBook06elemInfo} between two random variables is

\begin{align}
\label{mutInf}
\begin{split}
 I(X;Y) = &  I(p_X;p_Y) \\ = & \sum_{x \in R_X} \sum_{y \in R_Y} p_{X,Y}(x,y) \log(\frac{p_{X,Y}(x,y)}{p_X(x)p_Y(y)}). \\ 
% = &  H(X)-H(X|Y). \\
 \end{split}
\end{align}

The identity $I(X;Y) = H(X)-H(X|Y)$ leads to the natural interpretation of mutual information as  the increase in certainty of  $X$ when we have knowledge of $Y$.

\subsection{Transition systems and syntactically co-safe LTL}
\label{discMod}

A \em weighted transition system \em \cite{BaierText08Model}  is a tuple $TS =  (Q,q_0,Act, Trans, AP, L,d)$, where $Q$ is a set of states,
$q_0 \in Q$ is the initial state, $Act$ is a set of actions, $Trans \subseteq Q \times Act \times Q$ is a transition relation, $AP$ is a set of atomic propositions, $L: Q \to 2^{AP}$ is a labeling function of states to atomic propositions, and $d:Trans \to \mathbb{R}$ is a weighting function over the set of transitions.
%

%A finite word $a^0 \ldots a^{n-1} \in {Act}^{n}$ defines a run $q^0 \ldots q^n \in Q^{n+1}$ $(q^0=q_0)$ such that $(q^i,a^i,q^{i+1}) \in Trans$ $\forall i \in [0,n-1]$. 
%%The trace of the word $a^0 \ldots a^n $ is given as $L(q^0) \ldots L(q^{n}) \in (2^{AP})^{n+1}$. 
%

A  \em finite state automaton \em (FSA) is a tuple  $\mathcal{A} = (\Sigma, \Pi, \Sigma_0, F, \Delta_\mathcal{A})$ where
$\Sigma$ is a finite set of states, $\Pi$ is an input alphabet, $\Sigma_0 \subseteq \Sigma$ is a set of initial states, $F \subseteq \Sigma$ is a set of final (accepting) states, and $\Delta_\mathcal{A} \subseteq \Sigma \times \Pi \times \Sigma$ is a deterministic transition relation.

An \em accepting run \em $\ARUN_\mathcal{A}$ of an automaton $\mathcal{A}$ on a finite word $w=w^0w^1 \ldots w^j$ over $\Pi$ is a sequence of states $\ARUN_\mathcal{A}=\sigma^0\sigma^1 \ldots \sigma^{j+1}$ such that  $\sigma^{j+1} \in F$ and $(\sigma^i,w^i,\sigma^{i+1})  \in \Delta_\mathcal{A}$ $\forall i \in [0,j]$. 

%The set of all words corresponding to all of the  accepting  runs of $\mathcal{A}$ is called the \em language \em accepted by $\mathcal{A}$ and is denoted as $\LANG_\mathcal{A}$.
%

The \em product automaton \em between a weighted transition system $TS =  (Q,q_0,Act, Trans, AP, L,d)$ and an FSA $\mathcal{A} = (\Sigma, \Pi, \Sigma_0, F, \Delta_\mathcal{A})$  with $\Pi=2^{AP}$ is a tuple $\mathcal{P} = TS \times \mathcal{A} = (Q \times \Sigma, q_0 \times \Sigma_0,\Delta_{\mathcal{P}},Q \times F,d')$ \cite{BaierText08Model}.  The transition relation and weighting are defined as $\Delta_{\mathcal{P}} = \{(q,\sigma),\pi,(q',\sigma') |  (q,\pi,q') \in Trans, (\sigma,L(q),\sigma') \in \Delta_{\mathcal{A}}\}$ and $d'((q,\sigma),\pi,(q',\sigma')) = d(q,\pi,q')$, respectively. 

An \em accepting run \em $\ARUN_{\mathcal{P}}$ on a finite word $\pi = \pi^0\pi^1 \ldots \pi^{j}$ is a sequence of states $\ARUN_{\mathcal{P}} = (q^0,\sigma^0)(q^1,\sigma^1) \ldots (q^{j+1},\sigma^{j+1})$ such that $(q^0,\sigma^0) \in \{q_0\} \times \Sigma_0$, $(q^{j+1},\sigma^{j+1}) \in Q \times F$, and $((q^i,\sigma^i),\pi^i,(q^{i+1},\sigma^{i+1})) \in \Delta_{\mathcal{P}}$ $\forall i \in [0,j]$.  

%The collection of all words corresponding to an accepting run of $\mathcal{P}$ is called the \em language \em of $\mathcal{P}$, denoted by $\LANG_{\mathcal{P}}$.
%
The \em projection \em of a run $(q^0,\sigma^0) \ldots (q^j,\sigma^j)$ from $\mathcal{P}$ to $TS$ is the run $q^0 \ldots q^j$ over $TS$.
%
%\em Syntactically co-safe linear temporal logic \em (scLTL) is a fragment of linear temporal logic (LTL) where the satisfaction of infinite words can be checked with finite prefixes \cite{GolHscc12scltl,VardiFormMeth01safety}.
 
 \em Syntactically co-safe linear temporal logic \em formulas are made of atomic propositions along with the Boolean operators  ``conjunction" ($\wedge$), ``disjunction" ($\vee$) and ``negation" ($\neg$)  and the temporal operators ``until" ($\LTLUNTIL$), ``next" ($\LTLNEXT$), and ``eventually" ($\LTLEVENTUALLY$) \cite{VardiFormMeth01safety}. 

\section{Problem formulation}
\label{infPathPlan}
Our task is to find a path such that a robot following it fulfills a temporal logic task specification and also on average produces a low-entropy estimate of some \em a priori \em unknown quantity. We model a robot as a deterministic transition system over which we can evaluate temporal logic specifications and provide a model for incorporating new information into the robot's estimate.  We use these models to formalize the scLTL-constrained informative path planning problem.

\subsection{Robot motion model}
We consider a robot with known kinematic state moving deterministically in an environment.  Here we have taken a hierarchical view of path planning \cite{KaelbingICRA11hieirarch,WongpiromsarnHSCC10heir} in which the problem is decomposed into the  high-level problem of selecting way points  on a graph to be followed by the robot and the low-level problem of selecting local trajectories between nodes.  We assume that the low-level problem is solved and focus on high-level path planning.  We partition the environment and take the quotient to form a transition system $TS = (Q,q_0,Act, Trans, AP, L,d)$ \cite{BaierText08Model}, where $Q$ is the set of regions in the partition and $q_0 \in Q$ is the region where the robot is located initially.  $Act$ is a set of finite-time control policies  $\{u_i\}_{i \in [1,|Act|]}$ that can be enacted by the robot. A transition $(q_i,u_k,q_j) \in Trans$ is a pair of regions $q_i$ and $q_j$ and the control policy $u_k$ that can be applied to drive the robot from $q_i$ to $q_j$. 
$AP$ is a set of properties that can be assigned to regions in $Q$ and $L:Q \to 2^{AP}$ is the mapping giving the set of properties satisfied at each region. $d: Trans \to \mathbb{R}$ is a weighting over the transitions whose value corresponds to the cost of enacting the given control.  We define the discretized time $k$ that is initialized to 0 and incremented by 1 after a transition.  We denote the state of $TS$ at a time $k$ as $q^k$.

\subsection{Estimator and sensor dynamics}
\label{sensDyn}
The robot is tasked with estimating an environmental feature modeled as the random variable $S$.  We assume that the robot has onboard sensors and can take and process measurements related to $S$.  We  encapsulate the measurement and data-processing performed during a transition on $TS$ at time $k$ as a report $y^k$.  The report is drawn from a random process $Y^k$ whose randomness encapsulates sensor noise. 
% Due to sensor noise, we say that $y^k$ is a realization of a random process $Y^k$. 
 The pmf of $Y^k$ depends on the realization $s$ of $S$, the position of the robot, and sensor statistics.   We can use this model to construct a likelihood function $f(y^k,s,q^k) = Pr[Y^k=y^k|S=s, $ robot at $ q^k ]$. The robot maintains an estimate pmf $\hat{p}:R_S \times \mathbb{N} \to [0,1]$, where $\hat{p}(s,l) = Pr[S=s|\{Y^j =y^j\}_{j \in [0,l]}]$. After a transition is completed, the robot incorporates the report into $\hat{p}$ via a Bayes filter

\begin{equation}
\label{bayesFilter}
\hat{p}(s,l+1) = \frac{f(y^{l+1},s,q^{l+1})\hat{p}(s,l)}{\sum_{\sigma \in R_S} f(y^{l+1},\sigma ,q^{l+1})\hat{p}(\sigma,l)}.
\end{equation}

%  Define the function $\psi(y^k,s,q^k) = Pr[Y^k=y^k| S=s, $ robot at $ q^k $, no overlap$]$ and   Then we can construct a mapping $\Psi$ to incorporate our model of information bleed into the likelihood function $f$.
%
%\begin{align}
%\begin{split}
%\Psi(\{(d_M(q^l,q_k)\},\{\psi(y^l,s,q_k))\}_{q_k \in N_o(q^l) \cup q^l})  & \\ = f(y^l,s,q^l) & \\ 
%\end{split}
%\end{align}
%
%The form of $\Psi$ depends on the sensing model and on how we define the weighting $d_M$.  For instance, if $d_M$ is the Euclidean distance between the centers of regions and our sensor is a detector whose probability of correct detection decreases exponentially with 

\subsection{scLTL-constrained informative path planning}
\label{ipp}
%Abstractly, our task is to select the trajectory such that on average, enacting the trajectory will allow the robot to form the best estimate of $S$.  
Our task is to select a sequence of transitions $\{(q^i,u^i,q^{i+1})\}_{i \in [0,k-1]}$ such that the induced run $q^0 \ldots q^k$ over $TS$  on average produces the best estimate of $S$. The robot's knowledge of $S$ is given by its estimate $\hat{p}(\cdot,k)$.  We quantify the impact on $\hat{p}(\cdot,k)$ of a set of transitions  by using the mutual information $I(\hat{p}(\cdot,k);\{ Y^l \}_{l \in [0,k]})$,  a frequently-used measure of sensing quality in sensor networks, localization, and surveillance problems \cite{BourgaultICRA02infAdapt,SchwagerISRR11Hazards,KrauseISPN06optimalSense,MeguerdichianINFOCOM01optimalSense}.  Since our goal is to produce the best estimate, we  naturally wish to maximize the mutual information.  We may restate this objective by using the identity $I(\hat{p}(\cdot,k);\{Y^j\}_{j \in [0,k]}) = H(\hat{p}(\cdot,k)) - H(\hat{p}(\cdot,k)| \{Y^j\}_{j \in [0,k]} )$.   The estimate $\hat{p}(\cdot,\cdot)$ does not change over time if no new reports are received, so maximizing the mutual 
information is equivalent to minimizing the conditional entropy $H(\hat{p}(\cdot,k)| \{Y^j\}_{j \in [0,k]} )$.

\begin{problem}
\label{conippp}
The scLTL-constrained informative path planning problem over $TS$ is the optimization  

\begin{equation}
\label{def2}
\begin{array}{c}
 \underset{\{q^j\}_{j\in [0,k]}} \min E_{\{Y^j\}} [H(\hat{p}(\cdot,k)|\{Y^j\})] \\
\text{subject to} \\
\phi \\
(q^i,u^i,q^{i+1}) \in Trans \text{  } \forall i \in [0,k-1], \\ 
\end{array}
\end{equation}

\noindent
 
where $\phi$ is an scLTL formula over $AP$, the likelihood function $f$ and initial pmf $\hat{p}(\cdot,0)$ are given, and $k$ is finite but not fixed.
\end{problem}

%\begin{equation}
%\label{def1}
%\begin{array}{c}
%\max_{\{(q^m,u^m,q^{m+1})\}_{m \in [0,k-1]}} E_{\{\Omega_j\}_{j \in [0,h]}} [I(\hat{p}(\cdot,h);\{\zeta(q^j,s,j,\Omega_j)\})] \\
%\text{subject to} \\
%\phi \\
%\end{array}
%\end{equation}

%\begin{equation}
%\label{def2}
%\begin{array}{c}
%\min_{\{(q^m,u^m,q^{m+1})\}_{m \in [0,k-1]}} E_{\{\Omega_j\}_{j \in [0,k]}} [H(\hat{p}(\cdot,h)|\{\zeta(q^j,s,j,\Omega_j)\})] \\
%\text{subject to} \\
%\phi \\
%\end{array}
%\end{equation}

%This formulation gives a more natural problem interpretation than mutual information maximization.    If the conditional entropy of the estimator achieves its highest possible value $ (\log(|R_S|))$, this means our estimate is uniform, and using it to predict the value of $S$ would perform as well as using a roll of the dice. If, on the other hand, the objective achieves its lowest possible value, ($0$), then the estimate has a deterministic form. By lowering its conditional entropy, we increase how helpful knowledge of $\hat{p}(\cdot,k)$ is when trying to determine $S$. 

We discuss how to use model checking and optimization tools to solve this problem in the next section.

%%%%%%%%%%%%%%%%%%
\section{Receding horizon informative path planning}
%%%%%%%%%%%%%%%%%%
\label{algDesc}

From an scLTL formula $\phi$, we can construct an FSA $\mathcal{A}_{\phi}$ that will accept only those words that satisfy $\phi$ \cite{VardiFormMeth01safety,LatvalaBook03scheck}.  Given $TS$ and $\phi$, we can construct a product FSA $\mathcal{P} = TS \times \mathcal{A}_{\phi}$. Accepting runs over $\mathcal{P}$ are given as finite words $(q^0,\sigma^0) \ldots (q^k,\sigma^k)$ such that transitions between subsequent states are in $\Delta_{\mathcal{P}}$ and $\sigma^k \in F$. Problem \ref{conippp} can be solved using the following procedure:

\begin{algorithm}[Exhaustive Search]\
\label{brForce}

\begin{enumerate}
\item
From $TS$ and $\phi$, construct $\mathcal{P} = TS \times \mathcal{A}_{\phi}$
\item
Enumerate all  accepting runs of $\mathcal{P}$, i.e. all simple paths from $(q^0,\sigma^0)$ to states in $Q \times F$
\item
Project all accepting runs on $\mathcal{P}$ to runs over $TS$
\item
Calculate $E_{\{Y^j\}_{j \in [0,k]}} [H(\hat{p}(\cdot,k)|\{Y^j\})]$ for each accepting run.
\item
Select the trajectory with the minimum expected conditional entropy
\end{enumerate}
\end{algorithm}

The calculation of $E_{\{Y^j\}_{j \in [0,k]}} [H(\hat{p}(\cdot,k)|\{Y^j\})]$ from a given run proceeds as follows.  A run over $TS$ $q^0 \ldots q^k$ induces a sequence of reports $y^0,\ldots,y^k$.  We can find the estimate using \eqref{bayesFilter} that would result from observing a given sequence of reports and calculate $H(\hat{p}(\cdot,k)| \{y^j\}_{j \in [0,k]})$.  We can use the given run, the prior estimate pmf $\hat{p}(\cdot,0)$ and the likelihood function $f$ to construct  a pmf $p_{Y^0,\ldots,Y^k}(y^0,\ldots,y^k)$.  Taking these together we can calculate 

\begin{align}
\begin{split}
E_{\{Y^j\}_{j \in [0,k]}} [H(\hat{p}(\cdot,k)|\{Y^j\})]  & =  \\ \sum_{y^0,\ldots,y^k \in R_Y^k} H(\hat{p}(\cdot,k)| y^0,\ldots,y^k) & \times \\ p_{Y^0,\ldots,Y^k}(y^0,\ldots,y^k). \\
\end{split}
\end{align}

\subsection{Receding Horizon Control}
\label{recHoriz}

The exhaustive search (Algorithm \ref{brForce}) produces a solution that is optimal in expectation.  However, it is  computationally expensive (see Section \ref{comp}).   Algorithms exist to mitigate the computational costs incurred by Algorithm \ref{brForce}\cite{SinghJAIR09efficientinformative,MelioAAAAI07nstip}.

Algorithm \ref{brForce} gives a  non-reactive trajectory computed before the robot collects any additional information about $S$.   The optimal path is calculated based on the topology of $\mathcal{P}$, the sensor noise, and some initial guess $\hat{p}(\cdot,0)$.  
What if a sample path of $Y^k$ is atypical or $\hat{p}(\cdot,0)$ is a bad guess?  After making $l< k$ transitions, we cannot guarantee that 

$ \underset{\{q^{j}\}_{j \in [l,k]}} {\arg \min}  E_{\{Y^j\}} [  H(\hat{p}(\cdot,k)|\{Y^j\},\{y^m\}_{m \in [0,l)})]$ 

is the same as the end of the trajectory calculated using Algorithm \ref{brForce}. In the next section, we propose an on-line receding horizon algorithm that addresses the issues of computational explosion and non-reactivity.  
%The pre-computed trajectory is non-reactive to changes in the estimator.   

%The optimal decision trajectory over the finite horizon is calculated and the first decision in the trajectory is enacted.  After each transition, a new finite-horizon problem is constructed and solved and the next transition selected from the finite-horizon solution.  

%The authors of \cite{MelioAAAAI07nstip} add reactivity to the related informationally constrained shortest path problem by periodically performing a dynamic programming (DP) algorithm over possible paths to a fixed state.   
%We propose to use RHC instead of DP for our problem because the short-horizon problem only requires us to consider paths over a subset of $EP$ and because RHC provides the best path based on our most up-to-date information rather than requiring long periods between DP recalculations Further,  a DP algorithm is even more computationally expensive than the brute force solution as it involves both enumeration over trajectories and enumeration over sample paths  of $\Omega$.

\subsubsection{Algorithm description}

In the RHC approach to Problem \ref{conippp}, we select some horizon $b$ and at each time $l$ solve the following problem

\begin{subequations}
\label{optProb}
\begin{equation}
\label{mpc}
\begin{array}{c}
\underset{\{\delta_j \in \Delta_{\mathcal{P}}\}_{j \in [l,l+b]}} \min  E_{\{Y^j\}} [H(\hat{p}(\cdot,k)|\{Y^j\})]) \\
\text{subject to}  
\end{array} \end{equation} \begin{equation} \label{term}
\chi^{l+b}_{\text{opt}} \in N_{r,\chi_f}(\chi^l,b) \end{equation}  \begin{equation} \label{lyap}
W(\chi^{l+b}_{\text{opt}}) < W(\chi^{l+b-1}_{\text{pred}} ), \text{ if } N_{r,\chi_f}(\chi^l,b) \neq \chi_f  \end{equation} \begin{equation} \label{nonCyc}
\chi^{l+1}_{\text{opt}} \not\in \{\chi^{j}\}_{j \in [l_r,l]}, \text{ if } \chi^{l} \in N_r(\chi_f,b),
\end{equation}
\end{subequations} 

\noindent

where $\chi^l = (q^l,\sigma^l)$ is the state of $\mathcal{P}$ at time $l$, $\chi_{\text{opt}}$ is a state in the optimal  finite-horizon trajectory calculated at time $l$, $\chi_{\text{pred}}$ is a state in the optimal finite-horizon trajectory previously calculated at time $l-1$, $N_r(\chi,n)$ is the neighborhood of states about $\chi$ that are reachable in $n$ or fewer transitions, and $l_r$ is the minimum value of $l$ such that $\chi^l \in N_r(\chi_f,b)$. The optimization \eqref{optProb} is solved in the same manner as Algorithm \ref{brForce} in which feasible paths on $\mathcal{P}$ over the short horizon are enumerated, projected back to $TS$, and their expected impact on conditional entropy evaluated.  
The function $W: Q \times \Sigma \to \mathbb{R}$ is defined as

\begin{equation}
\label{lyapFun}
\begin{array}{c}
\chi_0 = (q_0, \sigma^{0}) \\
\chi_f  = \underset{\chi_k \in Q \times F} {\arg \max} \text{ }  D(\chi_0,\chi_k) \text{ s. t. } D(\chi_0,\chi_f) < \infty \\
W(\chi_j) =   D(\chi_j,\chi_f), \\ 
\end{array}
\end{equation}

\noindent

where $D(\cdot,\cdot)$ is the shortest graph distance between two states in $\mathcal{P}$.  The distance between two adjacent states is given by the weighting $d'$.  $N_{r,\chi_f}(\chi,n)$  is the constrained $n$-step reachability neighborhood 

\begin{equation}
N_{r,\chi_f}(\chi,n) = \left \{ \begin{array}{ll}  N_r(\chi,n) & \chi_f \not\in N_r(\chi,n) \\ \chi_f & \chi_f \in N_r(\chi,n) \end{array} \right. .
\end{equation}

\noindent

The extra conditions \eqref{term}-\eqref{nonCyc} ensure convergence to $\chi_f$ in finite time.  Constraint \eqref{term} ensures that if $\chi_f$ is reachable from the current position, the terminal state in the finite-horizon trajectory is $\chi_f$. Constraint  \eqref{lyap} is similar to a decreasing energy constraint used in Lyapunov convergence analysis.  It ensures that the finite-horizon trajectory moves closer to an accepting trajectory as time increases.  Condition \eqref{nonCyc} ensures that $\mathcal{P}$ does not cycle infinitely between non-accepting states.

We construct a receding horizon algorithm  adapted from \cite{DiLaBe-ACC-2012} to solve Problem \ref{conippp}. 

\begin{algorithm}[Receding Horizon]\
\label{recHorizAlg}

l = 0 

$\chi = (q_0,\sigma^0)$

\textbf{While} ($\chi \neq \chi_f$)

		\tab $\{\chi^m_{\text{pred}}\}_{m \in [l,l+b-1]} = \{\chi^m_{\text{opt}}\}_{m \in [l,l+b-1]}$
	
		\tab $\{\chi^m_{\text{opt}}\}_{m \in [l+1,l+b]}$ = \em solution to \em  \eqref{optProb}
	
		\tab $\chi = \chi_{\text{opt}}^{l+1}$
		
		\tab $l ++$
	
\end{algorithm}

If at least one satisfying run exists on $\mathcal{P}$ (i.e. if $W(\chi_0)$ is finite), then any path produced by Algorithm \ref{recHorizAlg} satisfies the specification $\phi$.  This is formalized in the following theorem.  The proof proceeds in a similar manner as in \cite{DiLaBe-ACC-2012}.  
 
\begin{thm}[scLTL satisfaction]
\label{convergeThm}
If $W(\chi_0) < \infty$, applying Algorithm \ref{recHorizAlg} to Problem \ref{conippp} will result in an accepting run on $\mathcal{P}$.
\end{thm}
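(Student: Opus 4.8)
The plan is to follow the Lyapunov-style argument of \cite{DiLaBe-ACC-2012}, treating the function $W$ of \eqref{lyapFun} as an energy on $\mathcal{P}$ and showing that the terminal constraints \eqref{term}--\eqref{nonCyc} drive the executed run to the accepting state $\chi_f$ in finitely many steps; note that the information-theoretic objective \eqref{mpc} plays no role in convergence, so the argument is purely about reachability on $\mathcal{P}$. I would organize it in three parts: (i) \eqref{optProb} is feasible at every iteration, so Algorithm \ref{recHorizAlg} is well defined; (ii) before $\chi_f$ comes within the horizon, the predicted terminal state strictly decreases in $W$, so this ``approach phase'' lasts only finitely long; (iii) once $\chi_f$ is within the horizon, \eqref{term} and \eqref{nonCyc} force arrival at $\chi_f$ after finitely many more steps.

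For (i), since $W(\chi_0) < \infty$ and $W$ is a graph distance to $\chi_f$, every reached state $\chi^l$ satisfies $W(\chi^l) < \infty$, so a shortest path from $\chi^l$ to $\chi_f$ exists in $\mathcal{P}$. I would exhibit an explicit feasible horizon by concatenating the tail $\chi^{l+1}_{\text{pred}}\ldots\chi^{l+b-1}_{\text{pred}}$ of the previously predicted trajectory with one further step along a shortest path from $\chi^{l+b-1}_{\text{pred}}$ toward $\chi_f$, and verify that it meets \eqref{term}, \eqref{lyap}, and (when active) \eqref{nonCyc}; the strict inequality in \eqref{lyap} comes from the fact that $\chi^{l+b-1}_{\text{pred}} \neq \chi_f$ in the approach phase and hence has a neighbor with strictly smaller $W$.

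For (ii), observe from the update $\{\chi^m_{\text{pred}}\}_m = \{\chi^m_{\text{opt}}\}_m$ in Algorithm \ref{recHorizAlg} that the predicted terminal state at time $l+1$ equals the optimal terminal state computed at time $l$; hence \eqref{lyap} yields $W(\chi^{l+1+b}_{\text{opt}}) < W(\chi^{l+b}_{\text{opt}})$ whenever $N_{r,\chi_f}(\chi^l,b) \neq \chi_f$. Since $\mathcal{P}$ has finitely many states, $W$ takes finitely many values, so this strictly decreasing sequence cannot continue forever: after finitely many steps we reach a time $l^\star$ with $N_{r,\chi_f}(\chi^{l^\star},b) = \chi_f$. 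For (iii), from $l^\star$ onward \eqref{term} forces $\chi^{l+b}_{\text{opt}} = \chi_f$, so $\chi_f$ remains reachable from the newly executed state within $b-1$ transitions; this confines the executed run to the finite set of states from which $\chi_f$ is reachable in at most $b$ steps, while \eqref{nonCyc} forbids revisiting any state in this endgame. A finite set whose elements cannot be revisited is exhausted in finitely many steps, forcing $\chi^{l+1}_{\text{opt}} = \chi_f$ and terminating the while loop. The executed sequence $\chi^0\chi^1\ldots\chi^L$ with $\chi^L = \chi_f \in Q \times F$ is then, by the definition of $\Delta_{\mathcal{P}}$, an accepting run on $\mathcal{P}$; projecting it to $TS$ gives a path whose trace is accepted by $\mathcal{A}_\phi$ and therefore satisfies $\phi$.

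The step I expect to be the main obstacle is the feasibility bookkeeping in parts (i) and (iii) rather than the finiteness counting: I must check that the shortest-path witness certifying feasibility simultaneously respects the strict-decrease constraint \eqref{lyap} (which needs an induction hypothesis relating the previously predicted terminal cost to the current distance-to-$\chi_f$) and the acyclicity constraint \eqref{nonCyc} (which needs that a path to $\chi_f$ whose first step avoids all previously executed endgame states always exists). This also forces one to be careful about the precise meaning of the reachability neighborhoods $N_r(\cdot,\cdot)$ and $N_{r,\chi_f}(\cdot,\cdot)$ in \eqref{term} and \eqref{nonCyc}, and to confirm that the executed endgame states indeed remain within a single finite neighborhood whose available states strictly shrink at each iteration.
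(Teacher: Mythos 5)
Your proposal follows essentially the same argument as the paper's proof: feasibility of \eqref{optProb} at each step is established by extending the tail of the previously predicted trajectory with one transition that strictly decreases $W$ (the paper's third property of $W$, equivalently your shortest-path step), finite-time entry of $\chi_f$ into the horizon follows from the monotone decrease of the predicted terminal value of $W$ over a finite state space, and arrival at $\chi_f$ is then forced by \eqref{term} together with the non-repetition constraint \eqref{nonCyc} exhausting the finite neighborhood $N_r(\chi_f,b)$. The feasibility bookkeeping you flag as a potential obstacle is treated at the same level of detail (i.e., asserted rather than elaborated) in the paper, so your plan is faithful to the published proof.
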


\begin{proof} \noindent

This proof uses the following properties of $W(\cdot)$

\begin{enumerate}
\item
$W(\chi_j) = 0 \Leftrightarrow \chi_j = \chi_f$
\item 
$W(\chi_j) = \infty \Leftrightarrow \chi_f$ not reachable from $\chi_j$
\item 
$W(\chi_j) < \infty \Rightarrow  \exists \chi_k \in N_r(\chi_j,1)$ such that $W(\chi_k) < W(\chi_j)$
\end{enumerate}

First, we must prove that if $W(\chi_0) < \infty$, \eqref{mpc} has a solution for every time $l$.  If $\chi^l$ is such that $N_{r,\chi_f}(\chi^l,b) = \chi_f$, then the condition $W(\chi_0) < \infty$ and \eqref{lyap} together imply  $W(\chi^l) < \infty$.  Thus, by definition of $N_{r,\chi_f}(\chi^l,b)$ and \eqref{term}, there exists a trajectory of $b$ or fewer steps from $\chi^l$ to $\chi_f$ that does not include any previously selected states.    If, on the other hand, $N_{r,\chi_f}(\chi^l,b) = N_r(\chi^l,b)$,  consider the $b-1$ step trajectory given by $T_{b-1} = \chi^l \chi^{l+1}_{\text{pred}}  \ldots \chi^{l+b-1}_{\text{pred}}$.  Since this trajectory is in $N_r(\chi^{l-1},b)$ it must also be in $N_r(\chi^{l},b-1)$.  By the third property of $W(\cdot)$, there is a neighbor $\chi_k$ of $\chi^{l+b-1}_{\text{pred}}$  such that $W(\chi_k) < W(\chi^{l+b-1}_{\text{pred}})$.  Define a $b$-step trajectory $T_{b}= T_{b-1}\chi_k$.  $T_{b}$ definitely exists and satisfies \eqref{term}-\eqref{nonCyc}.

Given that a solution exists for every time $l$, we must prove that application of Algorithm \ref{recHorizAlg} will drive $\mathcal{P}$ to the state $\chi_f$ in finite time.  Define the function $\alpha(l) = W(\chi^{l+b}_{\text{opt}})$.  The constraint \eqref{lyap} forces $\alpha$ to be monotonically decreasing and  by definition of $W$,  $\alpha$ has a global minimum value of 0.  $\mathcal{P}$ has a finite number of states, so there exists some finite time $l^*$ such that $\alpha(l^*) = 0$.  After $l^*$, all future states are constrained to $N_r(\chi_f,b)$.

Once $\mathcal{P}$ is in $N_r(\chi_f,b)$, \eqref{lyap} is insufficient to guarantee convergence to $\chi_f$.  
However, imposing the non-repetition constraint  \eqref{nonCyc} forces $\mathcal{P}$ to converge to $\chi_f$ in at most $|N_r(\chi_f,b)|$ steps after entering $N_r(\chi_f,b)$.

\end{proof}

Note that intuitively in an environment with spatially distributed information we expect that longer paths  generally will be more informative.  It may seem that using conditions  \eqref{term}-\eqref{nonCyc} to ensure convergence causes Algorithm \ref{recHorizAlg} to converge more quickly (produce shorter paths) than is desirable.  This effect is offset by the reactivity of the algorithm and selection of the optimal local trajectories at each time step.  Our approach can be adapted to further address path length concerns by including minimum path length constraints  in Algorithm \ref{recHorizAlg} or by specifying in the scLTL constraint $\phi$ a set of spatially dispersed regions that the robot must visit.

\subsubsection{Computational complexity}
\label{comp}

Define $K(\chi_0,\chi_f,t)$ as the number of simple paths of length less than or equal to $t$ that connect $\chi_0$ to $\chi_f$ in $\mathcal{P}$. Let $t^*$ be the length of the longest simple path in $\mathcal{P}$ and let  $\kappa (\mathcal{P},t) = \underset{\chi_0,\chi_f \in (Q \times \Sigma)^2 } \max K(\chi_0,\chi_f,t)$.  Calculating the expected impact of each transition on the conditional entropy of the estimate requires $|R_S|$ calculations.  
%Let $M = \max_{\chi \in Q \times \Sigma} |N_o(\chi)|$.  
The computational complexity of Algorithm \ref{brForce} is  therefore $O(|R_S| t^* \kappa(\mathcal{P},t^*))$.
 
For Algorithm \ref{recHorizAlg}, consider a single solution of \eqref{mpc} with short horizon $b$. The number of possible paths is bounded by $\ell \kappa(\mathcal{P},b)$, where $\ell$ is the number of edges of the maximally connected state in $\mathcal{P}$. The complexity of a single solution to \eqref{optProb} is $O(|R_S| b\ell \kappa(\mathcal{P},b))$.  Constraints \eqref{term}-\eqref{nonCyc} mean that \eqref{optProb} is solved at most $N=|Q \times \Sigma|$ times.  The complexity of Algorithm \ref{recHorizAlg} is $O(|R_S| b\ell N \kappa(\mathcal{P},b))$.

A comparison of worst-case complexity depends on the size and topology of $\mathcal{P}$.  Note that for a product automaton of large size and high enough connectivity, the function $\kappa(\mathcal{P},\cdot)$ increases exponentially in path length.  For such systems, Algorithm \ref{recHorizAlg} has the lowest worst-case complexity.  While it may seem disingenuous to compare the complexity of an off-line algorithm against an on-line algorithm, note that as the size and connectivity of $\mathcal{P}$ grow, it becomes infeasible to solve Algorithm \ref{brForce} in a reasonable amount of pre-mission time before it becomes infeasible to calculate Algorithm \ref{recHorizAlg} on-line. 

\section{Simulation Study}
\label{caseStudy}
We performed a simulation study demonstrating the use of Algorithm \ref{recHorizAlg} to solve Problem \ref{conippp}.   We assumed the transition system is the quotient of a gridded partition and that all neighboring regions are deterministically reachable.  Our variable of interest is $S = [S_j]_{j:q_j \in Q}$ where $R_{S_j} = \{0,1\}$.  
We assume that the $S_j$ are mutually independent.  After a transition to a new region, the robot  returns a report $y^k \in \{0,1\}$.   Our estimate is formed using a prior pmf and the Bayesian filter \eqref{bayesFilter}.  

We assume that the volume of a region $q \in Q$  is sufficiently small compared to the volume observable by the robot's sensors such that the robot will receive information from adjacent regions.  We model this overlap by a set $E_{\text{meas}}$, where the existence of an element $e_{jk} \in E_{\text{meas}}$  indicates information from region $q_j$ can be gathered while the robot is in $q_k$. We assume here that $E_{\text{meas}} = \{e_{jk}| (q_j,u,q_k) \in Trans \}$, though this assumption does not need to hold in general. 
Each  element of $E_{\text{meas}}$ is weighted according to the distance $d_M(q_j,q_k)$ that represents the amount of information contained in region $q_k$ that can be observed from region $q_j$.   Define the observation neighborhood of $q_k$ as $N_o(q_k) = \{q_i | e_{ik} \in E_{\text{meas}} \} \cup q_k$.  
We assume independent correct report rates $\mu(q^k,q_j) = \text{Pr}(Y^k = 1 | $ agent at $q^k, s_j = 1)$ and a constant false alarm rate $r$ such that the overall alert likelihood is 

\begin{align}
f(1,s,q^k) =  \left \{
\begin{array}{c}
r \text{  if  } s_j = 0 \text{  } \forall j: q_j \in N_o(q^k) \\
1- \underset{q_j \in N_o(q^k)}\prod 1 - \mu(q^k,q_j)s_j, \text{  else}
\end{array}
\right .
\end{align}

Since our reports are binary, we can calculate $f(0,s,q^k)$ from $f(1,s,q^k)$.  Our detection model is given by 

\begin{equation}
\mu(q^k,q_j) =\mu_0 e^{- \lambda d_M(q^k,q_j)}
\end{equation}

The propositions in our scenario are $AP = \{\text{D1},\text{D2},\text{C},\text{U} \}$.  The specification we wish to satisfy is ``Visit D1 before visiting D2 and visit D2 before ending in C while avoiding U".  The task is formalized as 

 \begin{equation}
\label{naturalDisaster}
\begin{array}{c}
\underset{\{q^j\}_{j\in [0,k]}}  \min  E_{\{Y^j\}} [H(\hat{p}(\cdot,k)|\{Y^j\})] \\
\text{subject to} \\
(\neg \text{U} \LTLUNTIL \text{C} ) \wedge (\neg \text{C} \LTLUNTIL \text{D2} ) \wedge (\neg \text{D2} \LTLUNTIL \text{D1} )
\end{array}
\end{equation} 

We generated a 5 x 5  grid-like abstraction with fixed initial state and terminal `C' state.  Our simulation was constructed using NetworkX graph algorithms \cite{Hagberg08networkx} and the model-checking algorithms of scheck \cite{LatvalaBook03scheck}.  We performed 100 Monte Carlo trials with randomly placed `D1', `D2', and `U' labels.  The sensing parameters were $\mu_0 = 0.9$, $r =0.01$, and $\lambda = 0.01$.  Weightings $d_M$ between adjacent states were drawn according to uniform distributions over $(0,10)$, graph distances between two adjacent states were set at a value of 1, and the $s_j$ were generated according to a Bernoulli distribution with parameter $p = 0.08$.  Figure \ref{samplePaths} shows sample paths resulting from using Algorithm \ref{recHorizAlg} to solve \eqref{naturalDisaster}.  Each run satisfied the given constraint specification. The average terminal entropy $H(\hat{p}(\cdot,k)|y^0,\ldots,y^k)$ over all trials was 14.78 bits.  The average CPU time required for construction of $\
mathcal{P}$ and optimal path finding per trial was 2.61 s on a machine with a 2.66 GHz Intel Core 2 Duo processor and 4 GB of memory.

\begin{figure*}
\begin{center}
\begin{tabular}{c c c}
% \includegraphics[scale=0.225]{../../../Simulation/MinimumEntropyTestbed/pathSamples/size_25_iteration_78} &
% \includegraphics[scale=0.225]{../../../Simulation/MinimumEntropyTestbed/pathSamples/size_25_iteration_11} &
% \includegraphics[scale=0.225]{../../../Simulation/MinimumEntropyTestbed/pathSamples/size_25_iteration_43}  \\
% (a) & (b) & (c) \\
\includegraphics[scale=0.225]{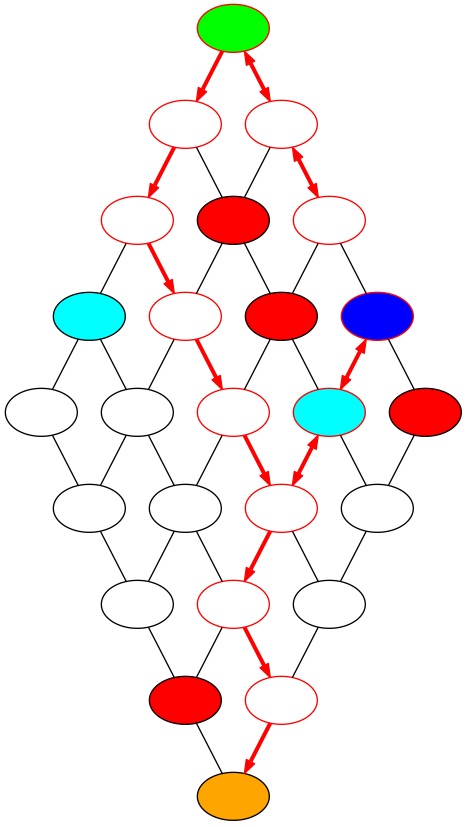} &
\includegraphics[scale=0.225]{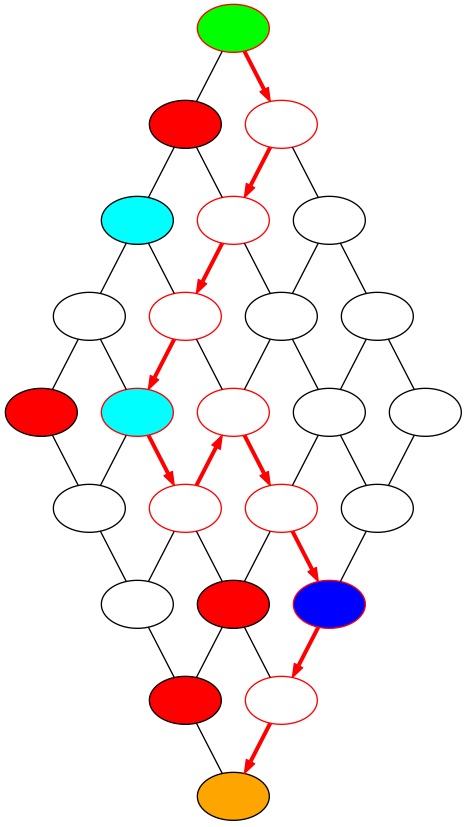} &
\includegraphics[scale=0.225]{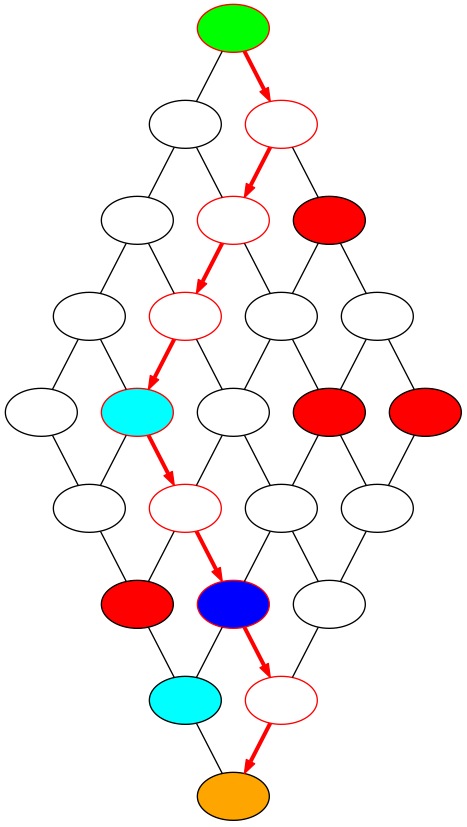}  \\
(a) & (b) & (c) \\
\end{tabular}
\caption{\label{samplePaths}  Three sample paths produced by our receding horizon algorithm to solve the scLTL-constrained informative path planning problem.  The red transitions indicate the path followed by the robot with arrows indicating direction.  The specification is ``Beginning at green, visit a light blue region, and then visit a dark blue region,  and finally visit the orange region while always avoiding red regions".   This corresponds to the formula in \eqref{naturalDisaster} where the orange state is `C', red states are `U', light blue are `D1', and dark blue are `D2'.}
\end{center}
\end{figure*}

In order to compare the performance of Algorithms \ref{brForce} and \ref{recHorizAlg}, we solved \eqref{naturalDisaster} over a single 6 x 6 transition system generated in the same manner as above. 
We chose the larger system in order to make comparisons over a larger number of accepting runs.  
We used Algorithm \ref{brForce} to find the optimal path in the constrained environment and constructed the pmf of the terminal entropy.  We also performed 250 Monte Carlo trials using Algorithm \ref{recHorizAlg} over the same environment and constructed the empirical pmf of the resulting terminal entropies.  Histogram representations of the two pmfs are shown in Figure \ref{compHists}.  The mean, median, and variance  are 26.30 bits, 26.44 bits, and 3.67 $\text{bits}^2$, respectively, for the pmf from Algorithm \ref{brForce} and 25.86 bits, 26.44 bits, and 2.80 $\text{bits}^2$, respectively, for the empirical pmf from Algorithm \ref{recHorizAlg}.  These results confirm our intuition about reactivity and performance: Algorithm \ref{recHorizAlg} performs better in expectation and has lower performance variability than Algorithm \ref{brForce}.

\begin{figure}
\begin{center}
\begin{tabular}{c}
\includegraphics[scale=0.4]{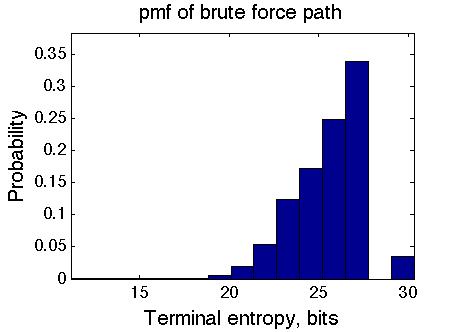} \\ (a) \\ 
\includegraphics[scale=0.4]{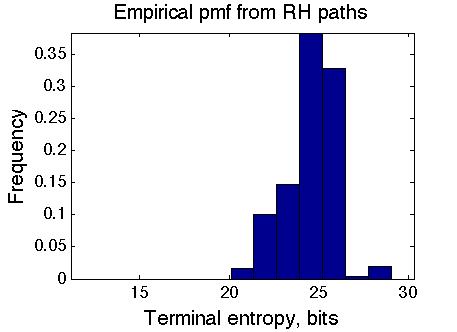} \\ (b) \\
\end{tabular}
\caption{\label{compHists} Histograms of (a) the pmf of the terminal entropy found when following the path from Algorithm \ref{brForce} and (b) the empirical pmf of the terminal entropy that resulted when the paths were calculated using Algorithm \ref{recHorizAlg}.  These histograms show that the mean and variance of the pmf of the terminal entropy is lower for the paths generated by Algorithm \ref{recHorizAlg} than the for the path generated by Algorithm \ref{brForce}.   The lower mean indicates that using Algorithm \ref{recHorizAlg} will result in a lower entropy estimate on average. The lower variability means that we are less likely to have a high entropy estimate when using Algorithm \ref{recHorizAlg}.  Algorithm \ref{brForce} took 1741 s of CPU time to complete and Algorithm \ref{recHorizAlg} took an average of 2.94 s of CPU time per execution to complete.}
\end{center}
\end{figure}

\section{Conclusions and Future Work}
\label{goals}

In this paper, we considered planning an informative path for a
robotic agent subject to temporal logic specifications.  We modeled
the robot as moving deterministically on a graph with noisy sensor measurements at each
node.  We proposed a receding horizon algorithm for solving this
problem in an on-line, computationally efficient manner while still ensuring specification satisfaction.  We compared the performance of our algorithm
with an off-line exhaustive search method in a simulation study.  Our algorithm
out performed the exhaustive search method, producing lower entropy
estimates with less computational overhead.  % the robot motion,
% sensing, and environment with a suitable model.  model suitable for
% tcreated an abstraction of a robotic agent that can effectively model
% both temporal logic properties and estimation of an unkown variable
% and defined the scLTL-constrained path planning problem over this
% model.  We showed how to combine the tools of information theory and
% model checking to solve this problem using exhaustive search.  We
% improved upon this solution by introducing a receding horizon
% algorithm that is guaranteed to produce feasible trajectories in a
% more efficient manner.  We confirmed these results and showed in
% simulation that our algorithm on average produces lower-entropy
% estimates than the exhaustive search solution.

One natural extension to this work is to plan a path that optimizes
some other quantity (e.g. path length or graph distance) subject to a
minimum level of mutual information. That is, we make the information
content of the path a constraint rather than an
objective.  %These problems can likely be solved in a straightforward manner using robust optimization techniques. \cite{robustOpt}
Another possible extension is to consider cases in which the
satisfaction of the temporal logic specification relies on some unknown
quantity.  Consider, for instance, a rescue robotics scenario in which
the robot is tasked not only with finding survivors, but also moving
the survivors to a medical station.  In this case, planning a path to
the medical station is impossible until the robot knows the survivor's
location.  This extension would allow us to use formal synthesis
methods in a more reactive manner.  More generally, we expect that the
fusion of information theoretic tools with formal control synthesis
will yield robotic control policies that are reactive to noisy,
real-world environments while still providing provably correct
performance.
%Feasible solution requires on-line estimation of the unknown required quantities.  
%If we incorporate these estimates into the state of the system, we can use the estimator dynamics to form a partially observable markov decision process (POMDP) \cite{pomdpltl,pomdp} .  Optimization problems over POMDPs are likely well-suited to solution via receding horizon methods.

%Finally, we can extend our approach to model constrained motion planning in distributed systems.  In distributed systems, agents move and gather information independently to perform some common task.  Our work here and the other proposed extensions involve making decisions on-line that are at least partially predicated by changes in the estimator.  In a distributed system, if an agent transmits its estimate to a neighbor, the data fusion process enacted by the neighboring agent drastically changes its own estimate and motion plan.  This fusion process adds a layer of decision-making as individual agents have to decide not only which transitions to make but also decide when and with whom to share gained information.  Integrating fusion dynamics into our model of the distributed system would allow us to incorporate message passing into our decision-making policy without communication entering explicitly in our optimization.

\addtolength{\textheight}{-2.5cm}

\bibliographystyle{plain}
\bibliography{mpcltl}

\begin{thebibliography}{10}

\bibitem{GolHscc12scltl}
Ebru Aydin~Gol, Mircea Lazar, and Calin Belta.
\newblock Language-guided controller synthesis for discrete-time linear
  systems.
\newblock In {\em Proceedings of the 15th ACM international conference on
  Hybrid Systems: Computation and Control}, pages 95--104, New York, NY, USA,
  2012.

\bibitem{BaierText08Model}
Christel Baier and Joost-Pieter Katoen.
\newblock {\em Principles of Model Checking (Representation and Mind Series)}.
\newblock The MIT Press, 2008.

\bibitem{BhatiaCDC2010scltl}
A.~Bhatia, L.E. Kavraki, and M.Y. Vardi.
\newblock Motion planning with hybrid dynamics and temporal goals.
\newblock In {\em Decision and Control (CDC), 2010 49th IEEE Conference on},
  pages 1108 --1115, Dec. 2010.

\bibitem{BinneyICRA2010infPlan}
J.~Binney, A.~Krause, and G.S. Sukhatme.
\newblock Informative path planning for an autonomous underwater vehicle.
\newblock In {\em Robotics and Automation (ICRA), 2010 IEEE International
  Conference on}, pages 4791 --4796, May 2010.

\bibitem{BourgaultICRA02infAdapt}
F.~Bourgault, A.A. Makarenko, S.B. Williams, B.~Grocholsky, and H.F.
  Durrant-Whyte.
\newblock Information based adaptive robotic exploration.
\newblock In {\em Intelligent Robots and Systems, 2002. IEEE/RSJ International
  Conference on}, volume~1, pages 540 -- 545 vol.1, 2002.

\bibitem{CoverBook06elemInfo}
Thomas~M. Cover and Joy~A. Thomas.
\newblock {\em Elements of Information Theory}.
\newblock Wiley-Interscience, 2nd edition, 2006.

\bibitem{DiLaBe-ACC-2012}
Xu~Chu Ding, Mircea Lazar, and Calin Belta.
\newblock Receding horizon temporal logic control for finite deterministic
  systems.
\newblock In {\em American Control Conference (ACC)}, Montreal, Canada, 2012.

\bibitem{GanICRA12recHor}
Seng~Keat Gan, R.~Fitch, and S.~Sukkarieh.
\newblock Real-time decentralized search with inter-agent collision avoidance.
\newblock In {\em Robotics and Automation (ICRA), 2012 IEEE International
  Conference on}, pages 504 --510, May 2012.

\bibitem{Hagberg08networkx}
Aric~A. Hagberg, Daniel~A. Schult, and Pieter~J. Swart.
\newblock Exploring network structure, dynamics, and function using {NetworkX}.
\newblock In {\em Proceedings of the 7th Python in Science Conference
  (SciPy2008)}, pages 11--15, Pasadena, CA USA, Aug 2008.

\bibitem{KaelbingICRA11hieirarch}
Leslie~Pack Kaelbling and Tomás Lozano-Pérez.
\newblock Hierarchical task and motion planning in the now.
\newblock In {\em Robotics and Automation (ICRA), 2011 IEEE International
  Conference on}, pages 1470--1477, 2011.

\bibitem{KrauseISPN06optimalSense}
Andreas Krause, Carlos Guestrin, Anupam Gupta, and Jon Kleinberg.
\newblock Near-optimal sensor placements: maximizing information while
  minimizing communication cost.
\newblock In {\em Proceedings of the 5th international conference on
  Information processing in sensor networks}, pages 2--10, New York, NY, USA,
  2006.

\bibitem{VardiFormMeth01safety}
Orna Kupferman and Moshe~Y. Vardi.
\newblock Model checking of safety properties.
\newblock {\em Formal Methods in System Design}, 2001.
\newblock volume 19, pages 291-314.

\bibitem{LatvalaBook03scheck}
Timo Latvala.
\newblock Efficient model checking of safety properties.
\newblock In Thomas Ball and Sriram Rajamani, editors, {\em Model Checking
  Software}, volume 2648 of {\em Lecture Notes in Computer Science}, pages
  624--636. Springer Berlin / Heidelberg, 2003.

\bibitem{MayneAuto99thSurvey}
D.Q. Mayne, J.B. Rawlings, C.V. Rao, and P.O.M. Scokaert.
\newblock Constrained model predictive control: Stability and optimality.
\newblock {\em Automatica}, 36, pages 789-814, 1999.

\bibitem{MeguerdichianINFOCOM01optimalSense}
S.~Meguerdichian, F.~Koushanfar, M.~Potkonjak, and M.B. Srivastava.
\newblock Coverage problems in wireless ad-hoc sensor networks.
\newblock In {\em INFOCOM 2001. Twentieth Annual Joint Conference of the IEEE
  Computer and Communications Societies.}, volume~3, pages 1380 --1387, 2001.

\bibitem{MelioAAAAI07nstip}
Alexandra Meliou, Andreas Krause, Carlos Guestrin, and Joseph~M. Hellerstein.
\newblock Nonmyopic informative path planning in spatio-temporal models.
\newblock In {\em Proceedings of the 22nd national conference on Artificial
  intelligence}, volume~1, pages 602--607, 2007.

\bibitem{QinConEng03inSurvey}
S.~Joe Qin and Thomas~A. Badgwell.
\newblock A survey of model predictive control technology.
\newblock {\em Control Engineering Practice}, 11, pages 733-764, 2003.

\bibitem{SchwagerISRR11Hazards}
M.~Schwager, P.~Dames, D.~Rus, and V.~Kumar.
\newblock A multi-robot control policy for information gathering in the
  presence of unknown hazards.
\newblock In {\em Proceedings of the International Symposium on Robotics
  Research (ISRR 11)}, Aug. 2011.

\bibitem{ShannonBell48ent}
C.~E. Shannon.
\newblock A mathematical theory of communication.
\newblock {\em Bell System Technical Journal}, 27, pages 379-423,623-656, 1948.

\bibitem{SinghIJCAI07infPlan}
Amarjeet Singh, Andreas Krause, Carlos Guestrin, William Kaiser, and Maxim
  Batalin.
\newblock Efficient planning of informative paths for multiple robots.
\newblock In {\em Proceedings of the 20th international joint conference on
  Artifical intelligence}, pages 2204--2211, 2007.

\bibitem{SinghJAIR09efficientinformative}
Amarjeet Singh, Andreas Krause, Carlos Guestrin, and William~J. Kaiser.
\newblock Efficient informative sensing using multiple robots.
\newblock {\em J. Artificial Intelligence Research}, 34, pages 707-755, Apr
  2009.

\bibitem{ThrunBook08SlaM}
Sebastian Thrun.
\newblock Simultaneous localization and mapping.
\newblock In Margaret Jefferies and Wai-Kiang Yeap, editors, {\em Robotics and
  Cognitive Approaches to Spatial Mapping}, volume~38 of {\em Springer Tracts
  in Advanced Robotics}, pages 13--41. Springer Berlin / Heidelberg, 2008.

\bibitem{WongpiromsarnHSCC10heir}
Tichakorn Wongpiromsarn, Ufuk Topcu, and Richard Murray.
\newblock Receding horizon control for temporal logic specifications.
\newblock In {\em HSCC10: Proceedings of the 13th ACM International Conference
  on Hybrid Systems:Computation and Control}, pages 101--110, 2010.

\end{thebibliography}

\end{document}